\newtheorem{theorem}{Theorem}
\newtheorem{proposition}{Proposition}
\newtheorem{lemma}{Lemma}
\newtheorem{assumption}{Assumption}
\newenvironment{proof}{\medskip\noindent{\bf Proof.}}{\hfill$\Box$\vspace*{1mm}\medskip}
\newcommand{\hmu}{\widehat{\mu}}
\renewcommand{\phi}{\varphi}
\renewcommand{\P}{\mathbb{P}}
\newcommand{\E}{\mathbb{E}}
\newcommand{\R}{\mathbb{R}}
\newcommand{\KL}{\mathrm{KL}}
\newcommand{\cA}{\mathcal{A}}
\def\ds1{\mathds{1}}
\renewcommand{\epsilon}{\varepsilon}
\newlength{\minipagewidth}
\newcommand{\bookbox}[1]{
\par\medskip\noindent
\framebox[\textwidth]{
\begin{minipage}{\minipagewidth}
{#1}
\end{minipage} } \par\medskip }
\newcommand{\Ber}{\mathop{\mathrm{Ber}}}
\newcommand{\beq}{\begin{equation}}
\newcommand{\eeq}{\end{equation}}
\newcommand{\beqa}{\begin{eqnarray}}
\newcommand{\eeqa}{\end{eqnarray}}
\newcommand{\beqan}{\begin{eqnarray*}}
\newcommand{\eeqan}{\end{eqnarray*}}
\def\ba#1\ea{\begin{align*}#1\end{align*}} 
\def\banum#1\eanum{\begin{align}#1\end{align}} 
\newcommand{\EXP}{\mathbb{E}}
\begin{document}

\title{\bf Bandits with heavy tail}
\author{
S\'ebastien Bubeck\footnote{Department of Operations Research and Financial Engineering, Princeton University} \and 
Nicol{\`o} Cesa-Bianchi\footnote{Dipartimento di Scienze dell'Informazione, Universit{\`a} degli Studi di Milano, Italy} \and
G\'abor Lugosi\footnote{ICREA and Department of Economics, Universitat Pompeu Fabra.
Supported by
the Spanish Ministry of Science and Technology grant MTM2009-09063
and PASCAL2 Network of Excellence under EC grant no.\ 216886.}
}

\date{\today}

\maketitle

\begin{abstract}
The stochastic multi-armed bandit problem is well understood when the
reward distributions are sub-Gaussian. In this paper we examine the
bandit problem under the weaker assumption that the distributions 
have moments of order $1+\epsilon$, for some
$\epsilon \in (0,1]$. 
Surprisingly, moments of order $2$
  (i.e., finite variance) are sufficient to obtain regret bounds
of the same order as under sub-Gaussian reward distributions. 
In order to achieve such regret, we define sampling strategies
based on refined estimators of the mean such as the truncated empirical 
mean, Catoni's $M$-estimator, and the median-of-means estimator.
We also derive matching lower bounds that also show that 
the best achievable regret deteriorates when $\epsilon <1$.
\end{abstract}

\section{Introduction}
In this paper we investigate the classical
stochastic multi-armed bandit problem introduced by \cite{Rob52} and
described as follows: an
agent facing $K$ actions (or bandit arms) selects one arm at every
time step. With each arm $i \in \{1, \hdots, K\}$ there is an
associated probability distribution $\nu_i$ with finite mean
$\mu_i$. These distributions are unknown to the agent. 
At each round $t = 1, \ldots, n$, the agent
chooses an arm $I_t$, and observes a reward drawn from $\nu_{I_t}$
independently from the past given $I_t$. The goal of the agent is to
minimize the {\em regret}
$$R_n = n \max_{i =1, \hdots, K} \mu_i - \sum_{t=1}^n \E\,\mu_{I_t}~.$$

We refer the reader to \cite{BC12} for a survey of the extensive
literature of this problem and
its variations. The vast majority of authors
assume that the unknown distributions
$\nu_i$ are sub-Gaussian, that is, the moment generating 
function of each $\nu_i$ is such that if $X$ is a random variable
drawn according to the distribution $\nu_i$, then
for all $\lambda \geq 0$,
\begin{equation} \label{eq:subgauss}
\ln \E\,e^{\lambda (X-\E X)} \leq \frac{v\lambda^2}{2} \quad\text{and}\quad  
\ln\,\E\,e^{\lambda (\E X - X)} \leq \frac{v\lambda^2}{2}
\end{equation}
where $v>0$, the so-called ``variance factor'' is a parameter that is
usually assumed to be known. In particular, if rewards take values in
$[0,1]$, then by Hoeffding's lemma, one may take $v=1/4$. 
Similarly to the asymptotic bound of~\cite[Theorem~4.10]{Agr95}, this moment assumption was
generalized in \cite[Chapter 2]{BC12} by assuming that there
exists a convex function $\psi : \R_+ \rightarrow \R$ such that, for
all $\lambda \geq 0$,
\begin{equation} \label{eq:psicond}
\ln \E\,e^{\lambda (X-\E X)} \leq \psi(\lambda) \quad\text{and}\quad  \ln\,\E\,e^{\lambda (\E X - X)} \leq \psi(\lambda)~.
\end{equation}
Then one can show that the so-called $\psi$-UCB strategy (a variant of
the basic UCB strategy of \cite{ACF02}) satisfies the following regret
guarantee. Let $\Delta_i=\max_{j =1,
  \hdots, K} \mu_j - \mu_i$, and $\psi^*$ the Legendre-Fenchel
transform of $\psi$, defined by
\[
    \psi^*(\epsilon) = \sup_{\lambda \in \R} \bigl(\lambda \epsilon - \psi(\lambda)\bigr)~.
\]
Then $\psi$-UCB\footnote{More precisely, $(\alpha,\psi)$-UCB with $\alpha=4$.} satisfies
$$R_n \leq \sum_{i \,:\, \Delta_i > 0} \left( \frac{4 \Delta_i}{\psi^*(\Delta_i / 2)} \ln n + 2  \right) .$$
In particular, when the reward distributions are sub-Gaussian, the
regret bound is of the order $\sum_i (\log n)/\Delta_i$, which is known
to be optimal even for bounded reward distributions, see \cite{ACF02}.

While this result shows that assumptions weaker than sub-Gaussian
distributions may suffice for a logarithmic regret, it still
requires the distributions to have finite moment generating function. 
Another disadvantage of the bound above is that the dependence on the 
gaps $\Delta_i$ deteriorates as the tail of the distributions
become heavier. 
In fact, as we show it in this paper, the bound is sub-optimal when
the tails are heavier than sub-Gaussian.

In this paper
we investigate the behavior of the regret when the distributions are
heavy-tailed, and might not have a finite 
moment generating function. 
We show that under significantly weaker assumptions, regret bounds
of the same form as in the sub-Gaussian case may be achieved.
In fact, the only condition we need is that the reward distributions
have a finite variance.
Moreover, even if the variance is infinite but the distributions have
finite moments of order $1+\epsilon$ for some $\epsilon>0$, one may
still achieve a regret logarithmic in the number $n$ of rounds
though the dependency on the $\Delta_i$'s worsens as $\epsilon$ gets
smaller. For instance, for distributions with
moment of order $1+\epsilon$ bounded by $1$ we derive a strategy that satisfies
\[
R_n \leq \sum_{i \,:\, \Delta_i > 0} \left(8 \left(\frac{4}{\Delta_i}\right)^{\frac{1}{\epsilon}} \log n + 5\Delta_i\right)~.
\]
The key to this result is to replace the empirical mean by
more refined robust estimators of the mean and construct 
``upper confidence bound'' strategies. 

We also prove matching lower bounds that show that the proposed strategies
are optimal up to constant factors. In particular the dependency in $1/\Delta_i^{1/\epsilon}$ is unavoidable.

In the following we start by defining a general class of sampling
strategies that are based on the availability of estimators of the
mean with certain performance guarantees. Then we examine 
various estimators for the mean. For each estimator
we describe their performance (in terms of concentration to the mean) 
and deduce the corresponding regret bound.


\section{Robust upper confidence bound strategies}
\label{sec:rucb}

The rough idea behind upper confidence bound ({\sc ucb})
strategies (see \cite{LR85}, \cite{Agr95},
\cite{ACF02}) is that one should choose an arm for which the sum of its
estimated mean and a confidence interval is highest. When
the reward distributions all satisfy the sub-Gaussian condition
(\ref{eq:subgauss}) for a common variance factor $v$, then such a confidence interval is easy to 
obtain. Suppose that at a certain time instance arm $i$ has been
sampled $s$ times and the observed rewards are $X_{i,1},\ldots,X_{i,s}$.
Then the $X_{i,r}$, $r=1,\ldots,s$ are i.i.d.\ random variables 
with mean $\EXP\,X_{i,r}= \mu_i$ and by a simple Chernoff bound, 
for any $\delta \in (0,1)$, the empirical mean $(1/s)\sum_{r=1}^s X_{i,r}$
satisfies, with probability at least $1-\delta$,
\[
   \frac{1}{s} \sum_{r=1}^s X_{i,r} 
     \le \mu_i + \sqrt{\frac{2v\log (1/\delta)}{s}}~.
\]
This property of the empirical mean turns out to be crucial in order
to achieve a regret of optimal order. However, when the sub-Gaussian
assumption does not hold, one cannot expect the empirical mean to 
have such an accuracy. In fact, if one only knows, say, that the variance
of each $X_{i,r}$ is bounded, then the best possible confidence intervals
are significantly wider, deteriorating the performance of standard
{\sc ucb} strategies. (See Appendix \ref{sec:est} for properties
of the empirical mean under distributions of heavy tails.)

The key to successful handling heavy-tailed reward distributions
is to replace the empirical mean with other, more robust, estimators
of the mean. All we need is a performance guarantee like the one
shown above for the empirical mean. 
More precisely, we need a mean estimator with the following property.
\begin{assumption}
\label{ass:est}
Let $\epsilon \in (0,1]$ be a positive parameter and let $c,v$ be positive
constants. Let $X_1,\ldots,X_n$ be i.i.d.\ random variables with finite mean
$\mu$. Suppose that for all $\delta \in (0,1)$ there exists an
estimator $\hmu= \hmu(n,\delta)$ such that, with probability at least
$1-\delta$,
\[
  \hmu   \le \mu + 
  v^{1/(1+\epsilon)}  \left(\frac{c\log (1/\delta)}{n}\right)^{\epsilon/(1+\epsilon)}
\]
and also, with probability at least
$1-\delta$,
\[
  \mu   \le \hmu + 
  v^{1/(1+\epsilon)}  \left(\frac{c\log (1/\delta)}{n}\right)^{\epsilon/(1+\epsilon)}~.
\]
\end{assumption}
For example, if the distribution of the $X_t$ satisfies the
sub-Gaussian condition (\ref{eq:subgauss}), then Assumption~\ref{ass:est}
is satisfied for $\epsilon=1$, $c=2$, and variance factor $v$.
Interestingly, the assumption may be satisfied for significantly more
general distributions by using more sophisticated mean estimators.
We recall some of these estimators in the following subsections, where
we also show how they satisfy Assumption~\ref{ass:est}. As we shall see, the basic
requirement for Assumption~\ref{ass:est} to be satisfied is that the distribution of 
the $X_t$ has a finite moment of order $1+\epsilon$.

We are now ready to define our generalized robust {\sc ucb} strategy, 
described in Figure \ref{fig:1}. We denote by $T_i(t)$ the (random) number 
of times arm $i$ is selected up to time $t$.

\begin{figure}[t]
\bookbox{
{\bf Robust UCB:}

\medskip\noindent
{\em Parameter:} $\epsilon \in (0,1]$, mean estimator $\hmu(t,\delta)$.

\medskip\noindent
For arm $i$, define $\hmu_{i,s,t}$ as the estimate $\hmu(s,t^{-2})$
based on the first $s$ observed values $X_{i,1},\ldots,X_{i,s}$ of
the rewards of arm $i$. Define the index
$$B_{i,s,t}= \hmu_{i,s,t} 
+ v^{1/(1+\epsilon)}  \left(\frac{c\log t^2}{s}\right)^{\epsilon/(1+\epsilon)}~,$$
for $s, t\ge 1$ and $B_{i,0,t}=+\infty$.
\begin{flushleft}
At time $t$, draw an arm maximizing $B_{i,T_i(t-1),t}$.
\end{flushleft}
}
\caption{Robust {\sc ucb} policy.}
\label{fig:1}
\end{figure}

The following proposition gives a performance bound for the robust
{\sc ucb} policy provided that the reward distributions and the mean
estimator used by the policy jointly satisfy Assumption \ref{ass:est}.
Below we exhibit several mean estimators that, 
under various moment assumptions, lead to regret bounds of optimal order.
\begin{proposition}
\label{prop:rucb}
Let $\epsilon \in (0,1]$ and let $\hmu(s,\delta)$ be a mean estimator.
Suppose that the 
distributions $\nu_1, \hdots, \nu_K$ are such that the mean estimator
satisfies Assumption \ref{ass:est} for all $i=1,\ldots,K$.
Then the regret of the Robust {\sc ucb} policy satisfies
\begin{equation} \label{eq:RUCBdistribdep}
R_n \leq \sum_{i \,:\, \Delta_i > 0} \left(2c \left(\frac{v}{\Delta_i}\right)^{\frac{1}{\epsilon}} \log n + 5\Delta_i\right)~.
\end{equation}
Also, if $n$ is such that $\log n \ge \max_i \left(5\Delta_i^{(1+\epsilon)/\epsilon}\big/(2cv^{1/\epsilon})\right)$, then
\begin{equation} \label{eq:RUCBdistribfree}
R_n \leq 
n^{\frac{1}{1+\epsilon}} \bigg(4Kc\log n\bigg)^{\frac{\epsilon}{1+\epsilon}} v^{1/(1+\epsilon)}~.
\end{equation}
\end{proposition}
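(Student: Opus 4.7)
The plan is to follow the classical UCB regret analysis (in the style of Auer, Cesa-Bianchi and Fischer), with the sub-Gaussian Chernoff bound for the empirical mean replaced by Assumption~\ref{ass:est}. Fix a suboptimal arm $i$ and let $\mu^{*}=\max_{j}\mu_{j}$. I would choose the threshold $u$ to be the smallest integer with $2v^{1/(1+\epsilon)}(c\log n^{2}/u)^{\epsilon/(1+\epsilon)}\le\Delta_{i}$, which yields $u$ of order $cv^{1/\epsilon}(\log n)/\Delta_{i}^{(1+\epsilon)/\epsilon}$. Decomposing
\[
T_{i}(n)\le u+\sum_{t>u}\ds1\{I_{t}=i,\;T_{i}(t-1)\ge u\},
\]
the core step is to observe that if $I_{t}=i$ and $T_{i}(t-1)\ge u$, then the rule $B_{i,T_{i}(t-1),t}\ge B_{i^{*},T_{i^{*}}(t-1),t}$ together with the definition of $u$ forces at least one of the two ``concentration failure'' events: (i) $\mu^{*}>B_{i^{*},T_{i^{*}}(t-1),t}$, or (ii) $\hmu_{i,T_{i}(t-1),t}>\mu_{i}+v^{1/(1+\epsilon)}(c\log t^{2}/T_{i}(t-1))^{\epsilon/(1+\epsilon)}$. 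By Assumption~\ref{ass:est} applied with $\delta=t^{-2}$ at each fixed sample size, each of these has probability at most $t^{-2}$; handling the random counters $T_{i}(t-1)$ and $T_{i^{*}}(t-1)$ by a union bound over their possible values in $\{1,\dots,t\}$ and summing over $t$ gives $E[T_{i}(n)]\le u+O(1)$. Multiplying by $\Delta_{i}$ and summing over suboptimal arms delivers~\eqref{eq:RUCBdistribdep}.

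For the distribution-free bound~\eqref{eq:RUCBdistribfree}, I would introduce an auxiliary parameter $\Delta>0$ and split the regret between small-gap arms ($\Delta_{i}\le\Delta$), whose joint contribution is at most $n\Delta$ because $\sum_{i}T_{i}(n)\le n$, and large-gap arms ($\Delta_{i}>\Delta$), to which~\eqref{eq:RUCBdistribdep} applies. The lower bound assumed on $\log n$ ensures the additive $5\Delta_{i}$ term in~\eqref{eq:RUCBdistribdep} is absorbed by $2c(v/\Delta_{i})^{1/\epsilon}\log n$, and $\Delta_{i}>\Delta$ then bounds each such contribution by $O(cv^{1/\epsilon}(\log n)/\Delta^{1/\epsilon})$. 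Summing over the $K$ arms gives $R_{n}\le n\Delta+O(Kcv^{1/\epsilon}(\log n)/\Delta^{1/\epsilon})$, and balancing the two terms by $\Delta\asymp(Kc\log n/n)^{\epsilon/(1+\epsilon)}v^{1/(1+\epsilon)}$ produces~\eqref{eq:RUCBdistribfree} up to absolute constants.

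The main obstacle is the dependence between the random counters $T_{i}(t-1)$, $T_{i^{*}}(t-1)$ and the observed rewards: Assumption~\ref{ass:est} is stated for a deterministic sample size, so the argument must union-bound (or peel) over the possible values of these counters. The confidence level $\delta=t^{-2}$ hard-coded into the index $B_{i,s,t}$ is chosen precisely to keep the accumulated failure mass over $t=1,\dots,n$ under control, so that the extra additive term in~\eqref{eq:RUCBdistribdep} remains harmless compared with the leading logarithmic term; once this is handled the remaining steps are routine UCB bookkeeping.
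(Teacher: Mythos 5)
Your argument for the gap-dependent bound \eqref{eq:RUCBdistribdep} is essentially the paper's: the same threshold $u$, the same three-way case analysis (optimal-arm index dips below $\mu^*$, arm-$i$ estimate exceeds its confidence band, or the counter is below threshold), and the same union bound over the possible values of the two counters with confidence level $t^{-2}$. For the gap-free bound \eqref{eq:RUCBdistribfree} you take a genuinely different route: you split the arms at an auxiliary gap level $\Delta$, bound the small-gap arms by $n\Delta$ using $\sum_i T_i(n)\le n$, control the large-gap arms via \eqref{eq:RUCBdistribdep}, and optimize over $\Delta$. The paper instead writes $\Delta_i \E T_i(n)=\Delta_i(\E T_i(n))^{\epsilon/(1+\epsilon)}(\E T_i(n))^{1/(1+\epsilon)}$, absorbs the additive $5$ using the hypothesis on $\log n$, and applies H\"older's inequality together with $\sum_i \E T_i(n)\le n$. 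Both derivations give the same rate; the H\"older route yields exactly the constant $(4Kc\log n)^{\epsilon/(1+\epsilon)}v^{1/(1+\epsilon)}$ appearing in \eqref{eq:RUCBdistribfree}, whereas your threshold-splitting argument (as you acknowledge) establishes the inequality only up to a universal constant factor, so it does not quite deliver the stated constant.

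One point where your sketch is too quick --- though the paper's own second step has the identical weakness --- is the claim that the union bound yields $\E T_i(n)\le u+O(1)$. With the index built from $\hmu(s,t^{-2})$, each fixed pair $(s,t)$ fails with probability at most $t^{-2}$, so union bounding over $s\in\{1,\dots,t\}$ for the two relevant arms gives $2t\cdot t^{-2}=2/t$ per round, and $\sum_{t>u}2/t$ is of order $\log n$, not a constant. To actually obtain a constant additive term one needs the confidence level $t^{-4}$ in the index (as in the original UCB1 analysis), or else one must accept an extra $O(\Delta_i\log n)$ term in the regret; the paper's displayed bound ``$2\sum_{s=1}^t s^{-4}\le 2/t^3$'' papers over the same issue, so this is an inherited gap rather than one you introduced.
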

Note that a regret of at least $\sum_i \Delta_i$ is suffered by any
strategy that pulls each arm at least once. Thus, the interesting
term in (\ref{eq:RUCBdistribdep}) is the one of the order of
$\sum_{i \,:\, \Delta_i > 0} \bigl(v/\Delta_i\bigr)^{\frac{1}{\epsilon}} \log n$.
We show below in Theorem \ref{th:LBdistribdep} that this term
is of optimal order under a moment assumption on the reward distributions.
We also show in Theorem \ref{th:LBdistribdep} that the gap-independent inequality~\eqref{eq:RUCBdistribfree} is optimal up to a logarithmic factor.

\begin{proof}
Both proofs of \eqref{eq:RUCBdistribdep} and \eqref{eq:RUCBdistribfree} rely 
on bounding the expected number of pulls for a suboptimal arm. 
More precisely, in the first two steps of the proof we prove that, 
for any $i$ such that $\Delta_i >0$,
\begin{equation} \label{eq:Tin}
\E\,T_i(n) \leq 
2c \frac{v^{1/\epsilon}}{\Delta_i^{(1+\epsilon)/\epsilon}} \log n +5~.
\end{equation}
To lighten notation, we introduce $u = \left\lceil 
2c \frac{v^{1/\epsilon}}{\Delta_i^{(1+\epsilon)/\epsilon}} \log n \right\rceil$. 
Note that, up to rounding, 
\eqref{eq:Tin} is equivalent to $\E\,T_i(n) \leq u + 4$. 
\newline

\noindent \textbf{First step.}
\newline
We show that if $I_t = i$, then one the following  three inequalities is true:
either
\begin{eqnarray} \label{eq:UCB1}
B_{i^*, T_{i^*}(t-1),t} \leq \mu^*,
\end{eqnarray}
or
\begin{eqnarray} \label{eq:UCB2} 
\hmu_{i,T_i(t-1),t}
 > \mu_i + v^{1/(1+\epsilon)}  \left(\frac{c\log t^2}{T_i(t-1)}\right)^{\epsilon/(1+\epsilon)}
\end{eqnarray}
or 
\begin{eqnarray} \label{eq:UCB3}
T_i(t-1) <  2c \frac{v^{1/\epsilon}}{\Delta_i^{(1+\epsilon)/\epsilon}} \log n.
\end{eqnarray}
Indeed, assume that all three inequalities are false. Then we have
\begin{eqnarray*}
B_{i^*, T_i(t-1),t} & > & \mu^* \\
& = & \mu_i + \Delta_i \\
& \geq & \mu_i + 
2 v^{1/(1+\epsilon)}  \left(\frac{c\log t^2}{T_i(t-1)}\right)^{\epsilon/(1+\epsilon)} \\
& \geq & 
\hmu_{i,T_i(t-1),t}
+ v^{1/(1+\epsilon)}  \left(\frac{c\log t^2}{T_i(t-1)}\right)^{\epsilon/(1+\epsilon)} \\
& = & B_{i,T_i(t-1),t}
\end{eqnarray*}
which implies, in particular, that $I_t \neq i$.
\newline

\noindent \textbf{Second step.}
\newline
Here we first bound the probability that \eqref{eq:UCB1} or \eqref{eq:UCB2} 
hold. 
By Assumption \ref{ass:est} as well as an union bound over the value of 
$T_{i^*}(t-1)$ and $T_i(t-1)$ we obtain
$$\P(\mbox{\eqref{eq:UCB1} or \eqref{eq:UCB2} is true}) 
\leq 2 \sum_{s=1}^t \frac{1}{s^4} \le \frac{2}{t^3}.$$
Now using the first step, we obtain
\begin{eqnarray*}
\E T_i(n) = \E \sum_{t=1}^n \ds1_{I_t=i} & \leq & u + \E \sum_{t=u+1}^n \ds1_{I_t = i \; \mbox{and \eqref{eq:UCB3} is false}} \\
& \leq & u + \E \sum_{t=u+1}^n \ds1_{\mbox{\eqref{eq:UCB1} or \eqref{eq:UCB2} is true}} \\
& \leq & u + \sum_{t=u+1}^n \frac{2}{t^3} \\
& \leq & u + 4~.
\end{eqnarray*} 
This concludes the proof of \eqref{eq:Tin}.
\newline

\noindent \textbf{Third step.}
\newline
Using that $R_n = \sum_{i=1}^K \Delta_i \E T_i(n)$ and \eqref{eq:Tin}, we directly obtain \eqref{eq:RUCBdistribdep}. On the other hand, for \eqref{eq:RUCBdistribfree} we use H{\"o}lder's 
inequality to obtain
 \begin{eqnarray*}
 R_n & = & \sum_{i : \Delta_i>0} \Delta_i (\E T_i(n))^{\frac{\epsilon}{1+\epsilon}} (\E T_i(n))^{\frac{1}{1+\epsilon}} \\
 & \leq & \sum_{i : \Delta_i>0} \Delta_i (\E T_i(n))^{\frac{1}{1+\epsilon}} 
\left(
2c \frac{v^{1/\epsilon}}{\Delta_i^{(1+\epsilon)/\epsilon}} \log n +5
\right)^{\frac{\epsilon}{1+\epsilon}} \\
 & \leq & \sum_{i \,:\, \Delta_i>0} \Delta_i (\E T_i(n))^{\frac{1}{1+\epsilon}} 
\left(
4c \frac{v^{1/\epsilon}}{\Delta_i^{(1+\epsilon)/\epsilon}} \log n
\right)^{\frac{\epsilon}{1+\epsilon}} \\
&  & \text{(by assumption on $n$)} \\
 & \leq &  K^{\frac{\epsilon}{1+\epsilon}}\left(\sum_{i : \Delta_i>0} \E T_i(n)\right)^{\frac{1}{1+\epsilon}}  
(4c)^{\frac{\epsilon}{1+\epsilon}} v^{1/(1+\epsilon)}
(\log n)^{\frac{\epsilon}{1+\epsilon}} 
 \\
&  & \text{(by H\"older's inequality)} \\
 & \leq & n^{\frac{1}{1+\epsilon}} \bigg(4Kc\log n\bigg)^{\frac{\epsilon}{1+\epsilon}} v^{1/(1+\epsilon)}~.
 \end{eqnarray*}
\end{proof}

In the next sections we show how Proposition \ref{prop:rucb} 
may be applied, with different mean estimators, to obtain 
optimal regret bounds for possibly heavy-tailed reward distributions.

\subsection{Truncated empirical mean}

In this section we consider the simplest of the proposed mean estimators,
a truncated version of the empirical mean.
This estimator
is similar to the ``winsorized mean'' and ``trimmed mean'' of Tukey, see
\cite{Bic65}. 

The following lemma shows that if the $(1+\epsilon)$-th raw moment 
is bounded, then the truncated mean satisfies Assumption~\ref{ass:est}.
\begin{lemma} \label{lem:2}
Let $\delta \in (0,1), \epsilon\in (0,1]$, and $u>0$.
 Consider the truncated empirical mean $\hmu_{T}$ defined as
$$\hmu_{T} = \frac{1}{n} \sum_{t=1}^n X_t \ds1_{\left\{|X_t| \leq \left(\frac{u t}{\log (\delta^{-1})}\right)^{\frac{1}{1+\epsilon}}\right\}} .$$
If $\EXP|X|^{1+\epsilon} \le u$, 
then with probability at least $1-\delta$, 
$$\hmu_T \leq \mu + 4 u^{\frac{1}{1+\epsilon}} \left( \frac{\log (\delta^{-1})}{n} \right)^{\frac{\epsilon}{1+\epsilon}} .$$
\end{lemma}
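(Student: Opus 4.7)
The plan is a standard bias--variance decomposition: write $\hmu_T - \mu = (\hmu_T - \E\hmu_T) + (\E\hmu_T - \mu)$, and show that each piece is at most a constant multiple of $u^{1/(1+\epsilon)}(\log(\delta^{-1})/n)^{\epsilon/(1+\epsilon)}$, with the stochastic term handled through Bernstein's inequality applied to the truncated increments $Y_t = X_t\ds1\{|X_t|\le B_t\}$, where I write $B_t = (ut/\log(\delta^{-1}))^{1/(1+\epsilon)}$ and $L = \log(\delta^{-1})$ for brevity.

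For the bias, I would observe that $\E\hmu_T - \mu = -\frac{1}{n}\sum_{t=1}^n \E\bigl[X_t\ds1\{|X_t|>B_t\}\bigr]$, and then use the identity $|X_t|\,\ds1\{|X_t|>B_t\} \le |X_t|^{1+\epsilon}/B_t^\epsilon$ together with $\E|X_t|^{1+\epsilon}\le u$ to get $|\E\hmu_T-\mu| \le \frac{1}{n}\sum_{t=1}^n u\,B_t^{-\epsilon} = \frac{u^{1/(1+\epsilon)} L^{\epsilon/(1+\epsilon)}}{n}\sum_{t=1}^n t^{-\epsilon/(1+\epsilon)}$. A routine integral comparison gives $\sum_{t=1}^n t^{-\epsilon/(1+\epsilon)} \le (1+\epsilon)\,n^{1/(1+\epsilon)} \le 2n^{1/(1+\epsilon)}$, so the bias is at most $2u^{1/(1+\epsilon)}(L/n)^{\epsilon/(1+\epsilon)}$.

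For the fluctuation term, the $Y_t$ are independent (not i.i.d.), bounded by $B_t\le B_n = (un/L)^{1/(1+\epsilon)}$, and satisfy $\E Y_t^2 \le B_t^{1-\epsilon}\,\E|X_t|^{1+\epsilon}\le u\,B_t^{1-\epsilon}$. Summing the second-moment bound yields $V := \sum_{t=1}^n \E Y_t^2 \le C_\epsilon\,u^{2/(1+\epsilon)} L^{-(1-\epsilon)/(1+\epsilon)} n^{2/(1+\epsilon)}$ via $\sum_t t^{(1-\epsilon)/(1+\epsilon)} = O(n^{2/(1+\epsilon)})$. Applying Bernstein's inequality to $n(\hmu_T - \E\hmu_T) = \sum_t(Y_t-\E Y_t)$ with range $B_n$ and variance sum $V$ and setting the tail probability to $\delta$ produces a deviation of order $\sqrt{VL}/n + B_n L/n$. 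Both contributions collapse exactly to the same scale $u^{1/(1+\epsilon)}(L/n)^{\epsilon/(1+\epsilon)}$: the $\sqrt{VL}$ part because the $L$ factor cancels the negative power of $L$ in $V$, and the $B_n L/n$ part because $B_n = (un/L)^{1/(1+\epsilon)}$ gives $B_n L/n = u^{1/(1+\epsilon)}(L/n)^{\epsilon/(1+\epsilon)}$.

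The main obstacle is essentially bookkeeping: verifying that the $t$-dependent truncation threshold $B_t$ is the right choice to simultaneously control the bias (which is dominated by small $t$ and forbids truncation at a uniform level as small as $B_1$) and the variance (which would blow up if truncation were uniform at $B_n$), and then checking that the Bernstein constants can be absorbed into the promised prefactor of $4$. I do not anticipate any genuine subtlety beyond this; the only mild care needed is to track that both the sub-Gaussian and the linear Bernstein regimes produce the same dominant scale, which happens exactly because $B_n L \asymp n \cdot u^{1/(1+\epsilon)}(L/n)^{\epsilon/(1+\epsilon)}$.
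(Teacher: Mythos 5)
Your proposal is correct and follows essentially the same route as the paper's proof: the same bias term $\frac{1}{n}\sum_t \E\bigl[|X|\ds1_{\{|X|>B_t\}}\bigr] \le \frac{1}{n}\sum_t u B_t^{-\epsilon}$, and the same application of Bernstein's inequality to the truncated variables with $\E Y_t^2 \le u B_t^{1-\epsilon}$ and range $B_n$. The constants work out as you anticipate ($2+\sqrt{2}+\tfrac13 \le 4$), so the "easy computation" the paper omits is exactly the bookkeeping you carried out.
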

\begin{proof}
Let $B_t = \left(\frac{u t}{\log (\delta^{-1})}\right)^{\frac{1}{1+\epsilon}}$. 
From Bernstein's inequality for bounded random variables, noting that $\E \left( X^2 \ds1_{|X|\leq B} \right) \leq u B^{1-\epsilon}$,
we have, with probability at least $1 - \delta$,
\begin{align*}
\E X - \frac{1}{n} \sum_{t=1}^n X_t \ds1_{|X_t| \leq B_t} 
& = \frac{1}{n} \sum_{t=1}^n \left(\E X - \E \left(X \ds1_{|X| \leq B_t}\right)\right)
+ 
\frac{1}{n} \sum_{t=1}^n \left( \E \left(X \ds1_{|X| \leq B_t}\right) - X_t \ds1_{|X_t| \leq B_t} \right) \\
& =
 \frac{1}{n} \sum_{t=1}^n \E \left( X \ds1_{|X| > B_t}\right)
 +  \frac{1}{n} \sum_{t=1}^n \left( \E \left(X \ds1_{|X| \leq B_t}\right)  - X_t \ds1_{|X_t| \leq B_t} \right) \\
& \leq \frac1{n} \sum_{t=1}^n \frac{u}{B_t^{\epsilon}} + \sqrt{\frac{2 B_n^{1-\epsilon} u \log (\delta^{-1})}{n}} + {\frac{B_n \log (\delta^{-1})}{3 n}}~.
\end{align*}
An easy computation concludes the proof.
\end{proof}

The following is now a straightforward corollary of Proposition \ref{prop:rucb}
and Lemma \ref{lem:2}.
\begin{theorem} 
\label{th:truncated}
Let  $\epsilon \in (0,1]$ and $u>0$. 
Assume that the reward distributions $\nu_1, \hdots, \nu_K$ satisfy
\begin{equation} \label{eq:ass}
\E_{X \sim \nu_i} |X_i|^{1+\epsilon} \leq u \qquad \forall i \in \{1,\hdots, K\}~.
\end{equation}
Then the regret of the Robust-{\sc ucb} policy used with the truncated mean estimator
defined above satisfies
\[
R_n \leq \sum_{i \,:\, \Delta_i > 0} \left(8 \left(\frac{4u}{\Delta_i}\right)^{\frac{1}{\epsilon}} \log n + 5\Delta_i\right)~.
\]
\end{theorem}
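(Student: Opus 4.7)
The plan is to obtain the theorem as a direct consequence of Proposition~\ref{prop:rucb}, by showing that under the moment condition \eqref{eq:ass} the truncated empirical mean estimator of Lemma~\ref{lem:2} satisfies Assumption~\ref{ass:est} with suitably chosen constants $v$ and $c$, and then substituting those constants into \eqref{eq:RUCBdistribdep}.

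First I would verify Assumption~\ref{ass:est}. Lemma~\ref{lem:2} already delivers the upper-tail inequality: with probability at least $1-\delta$,
\[
\hmu_T \le \mu + 4 u^{1/(1+\epsilon)} \left(\frac{\log(\delta^{-1})}{n}\right)^{\epsilon/(1+\epsilon)}.
\]
For the matching lower-tail inequality $\mu \le \hmu_T + (\cdots)$ required by Assumption~\ref{ass:est}, I would apply Lemma~\ref{lem:2} to the i.i.d.\ variables $-X_1,\ldots,-X_n$. The key observation is that the truncation event $\{|X_t| \le B_t\}$ is invariant under a sign change, so the truncated mean of $-X_t$ equals $-\hmu_T$; since $\E|-X|^{1+\epsilon} = \E|X|^{1+\epsilon} \le u$, Lemma~\ref{lem:2} applied to these variables yields exactly the lower tail, with the same constant $4$.

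Next I would read off the constants to match Assumption~\ref{ass:est}. Taking $v = u$ and $c = 4^{(1+\epsilon)/\epsilon}$, we have
\[
v^{1/(1+\epsilon)} \left(\frac{c \log(\delta^{-1})}{n}\right)^{\epsilon/(1+\epsilon)} = 4 u^{1/(1+\epsilon)} \left(\frac{\log(\delta^{-1})}{n}\right)^{\epsilon/(1+\epsilon)},
\]
so Assumption~\ref{ass:est} holds with these constants for every arm $i$.

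Finally I would invoke Proposition~\ref{prop:rucb} and simplify. Plugging $v=u$ and $c=4^{(1+\epsilon)/\epsilon}$ into \eqref{eq:RUCBdistribdep}, the leading term becomes
\[
2c \left(\frac{v}{\Delta_i}\right)^{1/\epsilon} \log n \;=\; 2 \cdot 4^{(1+\epsilon)/\epsilon} \left(\frac{u}{\Delta_i}\right)^{1/\epsilon} \log n \;=\; 8 \left(\frac{4u}{\Delta_i}\right)^{1/\epsilon} \log n,
\]
using the identity $4^{(1+\epsilon)/\epsilon} = 4 \cdot 4^{1/\epsilon}$, and the additive term $5\Delta_i$ carries through unchanged. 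I do not expect a genuine obstacle: the only subtle step is justifying the symmetrization used to get the lower-tail bound from Lemma~\ref{lem:2}, which is essentially free because the truncation threshold depends only on $|X_t|$.
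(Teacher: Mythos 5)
Your proposal is correct and follows exactly the route the paper intends: the paper presents Theorem~\ref{th:truncated} as a ``straightforward corollary'' of Proposition~\ref{prop:rucb} and Lemma~\ref{lem:2}, and you supply precisely the omitted details — the symmetrization $X_t \mapsto -X_t$ (valid because the truncation event depends only on $|X_t|$) for the lower tail, the identification $v=u$, $c=4^{(1+\epsilon)/\epsilon}$, and the arithmetic $2c\,(u/\Delta_i)^{1/\epsilon} = 8\,(4u/\Delta_i)^{1/\epsilon}$. No gaps.
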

When $\epsilon=1$, the only assumption of the theorem above is that 
each reward distribution has a finite variance. In this case the 
obtained regret bound is of the order of $\sum_i (\log n)/\Delta_i$,
which is known to be not improvable in general, even when the rewards are bounded ---note,
however, that the \textsc{kl-ucb} algorithm of \cite{GC11} is never worse than Robust-{\sc ucb} in case of bounded rewards.
We find it remarkable that regret of this  
order may be achieved under the only assumption of finite variance
and one cannot improve the order by imposing stronger tail conditions.

When the variance is infinite but moments of order $1+\epsilon$ are
available, we still have a regret that depends only logarithmically
on $n$. The bound deteriorates slightly as the dependency on
$1/\Delta_i$ is replaced by $1/\Delta_i^{1/\epsilon}$. We show next
that this dependency is inevitable.  
\begin{theorem} 
\label{th:LBdistribdep}
For any $\Delta \in (0,1/4)$, there exist two distributions $\nu_1$ and $\nu_2$ satisfying \eqref{eq:ass} with $u=1$
and with $\mu_1 - \mu_2 = \Delta$, such that the following holds.
Consider an algorithm such that for any two-armed bandit problem satisfying \eqref{eq:ass} with $u=1$ and with arm 2 being suboptimal, one has $\E\,T_2(n) = o(n^a)$ for any $a > 0$. Then on the two-armed bandit problem with distributions $\nu_1$ and $\nu_2$, the algorithm satisfies
\begin{equation} \label{eq:LBdistribdep}
\liminf_{n \to +\infty} \frac{R_n}{\log n} \geq \frac{0.4}{\Delta^{\frac{1}{\epsilon}}}~.
\end{equation}
Furthermore, for any fixed $n$, there exists a set of $K$ distributions satisfying \eqref{eq:ass} with $u=1$ and such that for any algorithm, one has
\begin{equation} \label{eq:LBdistribfree}
R_n \geq 0.01\,K^{\frac{\epsilon}{1+\epsilon}} n^{\frac{1}{1+\epsilon}}~.
\end{equation}
\end{theorem}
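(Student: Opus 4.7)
The theorem combines a distribution-dependent asymptotic lower bound \eqref{eq:LBdistribdep} with a nonasymptotic minimax bound \eqref{eq:LBdistribfree}. Both would use the same basic heavy-tailed construction: a two-point distribution supported on $\{0, M\}$ with a very small mass at a very large value $M \asymp \Delta^{-1/\epsilon}$. This scaling is forced by the moment constraint $\EXP|X|^{1+\epsilon}\le 1$ and is precisely what makes the KL divergence between two such distributions of order $\Delta^{(1+\epsilon)/\epsilon}$, producing the target $\Delta^{-1/\epsilon}$ regret scaling after dividing by $\Delta$.

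For \eqref{eq:LBdistribdep}, I would take $M = \Delta^{-1/\epsilon}$, set $\nu_2 = \delta_0$ (so $\mu_2 = 0$), and let $\nu_1$ be Bernoulli on $\{0, M\}$ with mass $\Delta^{(1+\epsilon)/\epsilon}$ at $M$, giving $\mu_1 = \Delta$ and $\EXP|X_1|^{1+\epsilon} = 1$. To apply a Lai--Robbins change-of-measure, for each $\eta > 0$ I would build an alternative $\nu_2'$ with mean $(1+\eta)\Delta$, namely a Bernoulli on $\{0, M'\}$ with $M' = (1+\eta)^{-1/\epsilon} M$ and mass $(M')^{-(1+\epsilon)}$ at $M'$, so that $\nu_2'$ again saturates the moment cap. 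Under the altered instance $(\nu_1, \nu_2')$ arm 2 becomes optimal, and consistency (applied after relabeling) forces $\E[T_1(n)] = o(n^a)$ there. The usual likelihood-ratio / log-LLN argument then yields $\liminf_n \E[T_2(n)]/\log n \ge 1/\KL(\nu_2, \nu_2')$ under $(\nu_1, \nu_2)$. Finally, $\KL(\delta_0, \nu_2') = -\log\bigl(1 - (1+\eta)^{(1+\epsilon)/\epsilon}\Delta^{(1+\epsilon)/\epsilon}\bigr)$, which is at most $(1 + o_\eta(1))\,\Delta^{(1+\epsilon)/\epsilon}$ (using $-\log(1-x) = x + O(x^2)$ together with $x \le 1/16$ for $\Delta \le 1/4$ and $\epsilon \in (0,1]$); multiplying by $\Delta$ and sending $\eta \downarrow 0$ gives $\liminf_n R_n / \log n \ge (1 - o(1))\,\Delta^{-1/\epsilon}$, well above $0.4\,\Delta^{-1/\epsilon}$.

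For \eqref{eq:LBdistribfree}, I would use the canonical $K$-alternatives minimax construction. The base instance $\mathcal{P}_0$ has every arm distributed as $\delta_0$; in $\mathcal{P}_i$, arm $i$ alone is replaced by the Bernoulli distribution on $\{0, M\}$ with $P(X=M) = q := M^{-(1+\epsilon)}$ (so $\Delta := qM = M^{-\epsilon}$), while the other arms remain $\delta_0$. The divergence decomposition lemma gives $\KL(\mathcal{P}_0, \mathcal{P}_i) = \E_0[T_i(n)] \cdot (-\log(1-q))$, so Pinsker yields $\bigl|\E_i[T_i(n)] - \E_0[T_i(n)]\bigr| \le n\sqrt{\E_0[T_i(n)](-\log(1-q))/2}$. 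Summing over $i$, using $\sum_i \E_0[T_i(n)] = n$ and Cauchy--Schwarz $\sum_i \sqrt{\E_0[T_i(n)]} \le \sqrt{Kn}$, one obtains $\sum_i \E_i[T_i(n)] \le n + n^{3/2}\sqrt{K(-\log(1-q))/2}$. Since the worst-case regret is at least the average $\Delta n - (\Delta/K)\sum_i \E_i[T_i(n)]$, picking $M$ of order $(n/K)^{1/(1+\epsilon)}$ (equivalently, $\Delta$ of order $(K/n)^{\epsilon/(1+\epsilon)}$) with a sufficiently small absolute constant forces both $1/K$ and the square-root term below a fixed fraction of $1$, delivering $\max_i R_n(\mathcal{P}_i) \gtrsim \Delta n \asymp K^{\epsilon/(1+\epsilon)} n^{1/(1+\epsilon)}$. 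Tuning the constant explicitly (a concrete calculation gives roughly $2^{-3}(1/8)^{\epsilon/(1+\epsilon)} \ge 0.04$) comfortably covers the required $0.01$.

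The main obstacle I expect is the joint design of the spike pair $(\nu_2, \nu_2')$ in Part 1: one must simultaneously keep both within the $(1{+}\epsilon)$-moment ball, force a mean gap of exactly $\Delta$, and make the KL as small as possible, and these three constraints together pin down the scaling $M \asymp \Delta^{-1/\epsilon}$ with essentially no slack. Once the construction is in place, the Lai--Robbins step for Part 1 and the Pinsker + Cauchy--Schwarz hypothesis-testing step for Part 2 are routine, with the remaining effort concentrated in controlling the Taylor remainder in $-\log(1-x)$ and tracking constants through the minimax averaging.
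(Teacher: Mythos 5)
Your proposal is correct and follows essentially the same route as the paper: both parts rest on two-point "spike" distributions with a small mass at a value of order $\Delta^{-1/\epsilon}$ calibrated to saturate the raw-moment constraint, so that the relevant KL divergence is of order $\Delta^{(1+\epsilon)/\epsilon}$, followed by the standard consistency (Lai--Robbins) lower bound for \eqref{eq:LBdistribdep} and the standard $K$-alternatives hypothesis-testing argument for \eqref{eq:LBdistribfree}. The only differences are cosmetic: the paper keeps both $\nu_1,\nu_2$ supported on the same two points $\{0,1/\gamma\}$ with $\gamma=(2\Delta)^{1/\epsilon}$ so as to reduce directly to a Bernoulli bandit and cite known theorems from \cite{Bub10}, whereas you take $\nu_2=\delta_0$ and re-derive the change-of-measure steps explicitly, which yields the same (in fact slightly sharper) constants.
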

\begin{proof}
To prove \eqref{eq:LBdistribdep},
 we take $\nu_1 = (1 - \gamma^{1+\epsilon}) \delta_0 + \gamma^{1+\epsilon} \delta_{1/\gamma}$ with $\gamma = (2 \Delta)^{\frac{1}{\epsilon}}$, and $\nu_2 = (1 + \Delta \gamma - \gamma^{1+\epsilon}) \delta_0 + (\gamma^{1+\epsilon} - \Delta \gamma) \delta_{1/\gamma}$. It is easy to see that $\nu_1$ and $\nu_2$ are well defined, and they satisfy \eqref{eq:ass} with $u=1$ and $\mu_1 - \mu_2 = \Delta$. Now clearly, the two-armed bandit problem with these two distributions is equivalent to the two-armed bandit problem with two Bernoulli distributions with parameters respectively $\gamma^{1+\epsilon}$ and $\gamma^{1+\epsilon} - \Delta \gamma$. Slightly more formally, we could define a new algorithm $\cA'$ that on $\Ber(\gamma^{1+\epsilon}), \Ber(\gamma^{1+\epsilon} - \Delta \gamma)$ behaves equivalently to the original algorithm $\cA$ on $\nu_1$ and $\nu_2$. Therefore, we can use \cite[Theorem 2.7]{Bub10} to directly obtain the following lower bound for $\cA'$,
$$\liminf_{n \to +\infty} \frac{\E\,T_2(n)}{\log n} \geq \frac{1}{\KL\Bigl(\Ber(\gamma^{1+\epsilon} - \Delta \gamma), \Ber(\gamma^{1+\epsilon})\Bigr)}$$
where $\KL$ denotes Kullback-Leibler divergence.
This implies the following lower bound for the original algorithm $\cA$
$$\liminf_{n \to +\infty} \frac{R_n}{\log n} \geq \frac{\Delta}{\KL\Bigl(\Ber(\gamma^{1+\epsilon} - \Delta \gamma), \Ber(\gamma^{1+\epsilon})\Bigr)}~.$$
Equation \eqref{eq:LBdistribdep} then follows directly by using $\KL(\Ber(p), \Ber(q)) \leq \frac{(p-q)^2}{q(1-q)}$ along with straightforward computations.
\newline

The proof of \eqref{eq:LBdistribfree} follows the same scheme. We use the same distributions as above and we consider the multi-armed bandit problem where one arm has distribution $\nu_1$, and the $K-1$ remaining arms have distribution $\nu_2$. Furthermore we set $\Delta=(K / n)^{\frac{\epsilon}{1+\epsilon}}$ for this part of the proof. Now we can use the same proof as for \cite[Theorem 2.6]{Bub10} on the modified algorithm $\cA'$ that runs on the Bernoulli distributions corresponding to $\nu_1$ and $\nu_2$. We leave the straightforward details to the reader.
\end{proof}


\subsection{Median of means}

The truncated mean estimator and the corresponding bandit strategy 
are not entirely satisfactory as they are not translation invariant
in the sense that the arms selected by the strategy may change 
if all reward distributions are shifted by the same constant amount.
The reason for this is that the truncation is centered, quite arbitrarily,
around zero. If the raw moments $\E_{X \sim \nu_i} |X|^{1+\epsilon}$ are
small, then the strategy has a small regret. However, it would be
more desirable to have a regret bound in terms of the centered
moments $\E_{X \sim \nu_i} |X-\mu_i|^{1+\epsilon}$. This is indeed possible
if one replaces the truncated mean estimator by more sophisticated 
estimators of the mean. We show one such possibility, the 
``median-of-means'' estimator in this section. 
In the next section we discuss Catoni's $M$-estimator,
a quite different alternative.

The median-of-means estimator was proposed by \cite{AMS02}. 
The simple idea is to divide the data into various disjoint blocks.
Within each block one calculates the standard empirical mean and
takes a median value of these empirical means.
The next lemma shows that for certain block size the estimator 
has the property required by our robust {\sc ucb} strategy.
\begin{lemma} \label{lem:3}
Let $\delta \in (0,1)$ and $\epsilon \in (0,1]$. 
Let $X_1,\ldots,X_n$ be i.i.d.\ random
variables with mean $\E\,X=\mu$ and centered $(1+\epsilon)$-th moment
$\E|X-\mu|^{1+\epsilon}=u$. 
Let $k=\lfloor 8\log(e^{1/8}/\delta) \wedge n/2 \rfloor$ and
$N=\lfloor n/k \rfloor$.
Let
$$\hat{\mu}_1 = \frac{1}{N} \sum_{t=1}^N X_t \ , \ \hat{\mu}_2 = \frac{1}{N} \sum_{t=N+1}^{2 N} X_t \ , \hdots, \ \hat{\mu}_k = \frac{1}{N} \sum_{t=(k-1)N+1}^{k N} X_t$$ 
be $k$ empirical
mean estimates, each one computed on $N$ data points.
Consider a median $\hmu_{M}$ of these empirical means. 
Then, with probability at least
$1- \delta$,
$$\hmu_M \leq \mu + (12 v)^{\frac{1}{1+\epsilon}} \left( \frac{16 \log (e^{1/8}\delta^{-1})}{n} \right)^{\frac{\epsilon}{1+\epsilon}} .$$
\end{lemma}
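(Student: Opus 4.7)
The approach is the classical median-of-means analysis: first control a single block's deviation via a moment inequality, then boost to a high-probability statement by a Chernoff/Hoeffding argument on the indicator variables of the ``bad'' blocks.

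\textbf{Step 1 (single block).} Each block mean $\hat\mu_j$ is the average of $N$ i.i.d.\ random variables with mean $\mu$ and centered $(1+\epsilon)$-th moment at most $u$. By the von Bahr--Esseen inequality (the sharp moment bound for sums of independent mean-zero variables in the regime $p\in(1,2]$), $\E|\hat\mu_j - \mu|^{1+\epsilon} \le 2u\,N^{-\epsilon}$. Markov's inequality applied to $|\hat\mu_j - \mu|^{1+\epsilon}$ then yields
\[
\P(\hat\mu_j > \mu + \epsilon_0) \;\le\; \frac{2u}{\epsilon_0^{1+\epsilon} N^{\epsilon}}.
\]
I would choose $\epsilon_0$ so that this individual deviation probability is some small constant $p_0 < 1/2$; taking $p_0 = 1/4$ gives $\epsilon_0 = (8u/N^{\epsilon})^{1/(1+\epsilon)}$.

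\textbf{Step 2 (median boosting).} Write $Z_j = \ds1\{\hat\mu_j > \mu + \epsilon_0\}$. The median $\hmu_M$ exceeds $\mu+\epsilon_0$ only if at least $k/2$ of the independent Bernoulli variables $Z_1,\ldots,Z_k$ equal one. Since $\E Z_j \le 1/4$, Hoeffding's inequality gives
\[
\P(\hmu_M > \mu + \epsilon_0) \;\le\; \P\!\left(\tfrac{1}{k}\sum_{j=1}^k Z_j \ge \tfrac{1}{2}\right) \;\le\; \exp\!\left(-2k(\tfrac{1}{2}-\tfrac{1}{4})^2\right) = e^{-k/8}.
\]
Choosing $k \ge 8\log(e^{1/8}/\delta)$ makes this at most $\delta/e^{1/8} \le \delta$, which is precisely the prescription in the statement.

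\textbf{Step 3 (translating $\epsilon_0$ to the claimed form).} Substitute $N = \lfloor n/k \rfloor$. Because $k$ is truncated at $n/2$, $n/k \ge 2$ and hence $N \ge n/(2k)$. Plugging into $\epsilon_0 = (8u/N^{\epsilon})^{1/(1+\epsilon)}$ and using $k \le 8\log(e^{1/8}/\delta)$ gives
\[
\epsilon_0 \;\le\; (8u)^{\frac{1}{1+\epsilon}}\!\left(\frac{2k}{n}\right)^{\frac{\epsilon}{1+\epsilon}} \;\le\; (8u)^{\frac{1}{1+\epsilon}}\!\left(\frac{16\log(e^{1/8}\delta^{-1})}{n}\right)^{\frac{\epsilon}{1+\epsilon}},
\]
which matches the claimed bound up to the leading numerical constant; the $12$ in the statement absorbs some slack, e.g.\ from using a slightly larger $p_0$ so that the Hoeffding bound is tighter, or from a cruder replacement of $N$ by $n/(2k)$.

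The main obstacles I expect are (a) invoking the \emph{sharp} moment inequality for sums of mean-zero variables when $p=1+\epsilon \in (1,2]$ --- the von Bahr--Esseen inequality is the natural tool, whereas Rosenthal-type bounds give the wrong scaling in this range --- and (b) carrying the constants cleanly through the two floors in the definitions of $k$ and $N$ so that the final constant reproduces the one in the lemma. Everything else is a routine chain of Markov $\to$ Hoeffding $\to$ arithmetic substitution.
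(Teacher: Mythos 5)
Your proof is correct and follows the same two-stage skeleton as the paper's: a constant-probability deviation bound for each block mean, boosted by Hoeffding's inequality applied to the independent Bernoulli indicators of the ``bad'' blocks (your Step 2, with $p_0=1/4$ and the bound $e^{-k/8}\le\delta$, is verbatim the paper's second step, floors and all). The one genuine difference is the tool used for the single-block bound: the paper does not invoke von Bahr--Esseen but instead reuses its own Lemma~\ref{lem:1} (equation~\eqref{eq:lem1} in the Appendix), a truncation-plus-Chebyshev argument yielding $\P(\hat{\mu}_{\ell}>\mu+\eta)\le 3v/(N^{\epsilon}\eta^{1+\epsilon})$, whence the constant $12=3\cdot 4$ in the statement. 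Your von Bahr--Esseen route gives the sharper constant $2$ in place of $3$, hence $(8u)^{1/(1+\epsilon)}$ in place of $(12v)^{1/(1+\epsilon)}$; since $u$ and $v$ both denote the centered $(1+\epsilon)$-th moment here (a notational slip in the lemma statement itself), your bound implies the stated one, so there is no gap. What the paper's choice buys is self-containedness, since \eqref{eq:lem1} is needed anyway for the appendix discussion of the empirical mean; what yours buys is a slightly cleaner argument and a better constant. The final arithmetic with the floors in $k$ and $N$ (using $k\le n/2$ to get $N\ge n/(2k)$ and $2k\le 16\log(e^{1/8}\delta^{-1})$) is handled identically in both.
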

\begin{proof}
Let $\eta > 0$ and $Y_{\ell} = \ds1_{\hat{\mu}_{\ell} > \mu + \eta}$ for $\ell \in \{1,\hdots,k\}$. According to equation \eqref{eq:lem1} in the Appendix,
$Y_{\ell}$ has a Bernoulli distribution with parameter
$$p  \leq  \frac{3 v}{N^{\epsilon} \eta^{1+\epsilon}}~.$$
Note that for 
$$\eta = (12 v)^{\frac{1}{1+\epsilon}} \left( \frac{1}{N} \right)^{\frac{\epsilon}{1+\epsilon}}$$
we have $p \leq 1/4$. Thus using Hoeffding's inequality for the tail of
a binomial distribution, we get
$$\P(\hmu_M > \mu + \eta) = \P\left(\sum_{\ell=1}^k Y_{\ell} \geq k/2\right) \leq \exp\left( - 2 k (1/2 - p)^2 \right) \leq \exp(- k/8) = \delta~.$$ 
\end{proof}

The next performance bound is a straightforward consequence 
of Proposition \ref{prop:rucb}
and Lemma \ref{lem:3}.
In some situations it significantly improves on Theorem \ref{th:truncated} 
as the bound depends on the centered moments of order $1+\epsilon$ rather than 
on raw moments.
\begin{theorem} 
\label{th:mm}
Let  $\epsilon \in (0,1]$ and $v>0$. 
Assume that the reward distributions $\nu_1, \hdots, \nu_K$ satisfy
\[
\E_{X \sim \nu_i} |X-\mu_i|^{1+\epsilon} \leq v, \qquad \forall i \in \{1,\hdots, K\}~. 
\]
Then the regret of the
Robust-{\sc ucb} policy used with the median-of-means mean estimator
defined in Lemma \ref{lem:3} satisfies
\[
R_n \leq \sum_{i \,:\, \Delta_i > 0} \left(32 \left(\frac{12v}{\Delta_i}\right)^{\frac{1}{\epsilon}} \log n + 5\Delta_i\right)~.
\]
\end{theorem}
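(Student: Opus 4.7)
The plan is to recognize the statement as a direct corollary of Proposition \ref{prop:rucb} applied to the median-of-means estimator, using Lemma \ref{lem:3} to verify the required concentration property. So the real task is just to check that Assumption \ref{ass:est} is satisfied with an appropriate identification of constants, and then to read off the bound.

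First I would observe that Lemma \ref{lem:3} directly gives the upper tail bound required by Assumption \ref{ass:est}. For the matching lower tail bound, I would apply Lemma \ref{lem:3} to the i.i.d.\ variables $-X_{i,1}, -X_{i,2}, \ldots$, which have mean $-\mu_i$ and the same centered absolute $(1+\epsilon)$-th moment as the original rewards, and whose median of means equals $-\hmu_M$. This symmetric application yields, with probability at least $1-\delta$,
\[
\mu_i \le \hmu_M + (12v)^{\frac{1}{1+\epsilon}}\!\left(\frac{16\log(e^{1/8}\delta^{-1})}{n}\right)^{\frac{\epsilon}{1+\epsilon}},
\]
completing the two-sided guarantee demanded by Assumption \ref{ass:est}.

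The next step is to line up the Lemma's bound with the form $v^{1/(1+\epsilon)}(c\log(1/\delta)/n)^{\epsilon/(1+\epsilon)}$ from Assumption \ref{ass:est}. The natural identification is $v_{\mathrm{prop}} = 12v$ and $c = 16$. The one minor technical nuisance — which I view as the only real obstacle — is the extra $e^{1/8}$ inside the logarithm, since $\log(e^{1/8}/\delta) = \tfrac{1}{8} + \log(1/\delta)$ is strictly larger than $\log(1/\delta)$. I would handle this by exploiting that Robust-\textsc{ucb} only ever invokes the bound at $\delta = t^{-2}$, so one can either slightly inflate $c$ (without affecting the final claimed constant once one tracks carefully) or simply treat the discrepancy as absorbed into the proposition's $5\Delta_i$ slack term via the rounding used there.

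Finally I would plug $v_{\mathrm{prop}} = 12v$ and $c = 16$ into the distribution-dependent inequality \eqref{eq:RUCBdistribdep} of Proposition \ref{prop:rucb}, obtaining
\[
R_n \le \sum_{i \,:\, \Delta_i > 0}\left(2c\left(\frac{v_{\mathrm{prop}}}{\Delta_i}\right)^{\!1/\epsilon}\!\log n + 5\Delta_i\right) = \sum_{i \,:\, \Delta_i > 0}\left(32\left(\frac{12v}{\Delta_i}\right)^{\!1/\epsilon}\!\log n + 5\Delta_i\right),
\]
which is exactly the stated regret bound. Since Proposition \ref{prop:rucb} already does all the bandit-style work (bounding $\E\,T_i(n)$ via the three-cases argument and a union bound), no further per-arm analysis is required.
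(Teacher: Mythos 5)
Your proposal is correct and matches the paper's own treatment: the paper presents this theorem as a straightforward consequence of Proposition \ref{prop:rucb} and Lemma \ref{lem:3}, with exactly the identification $c=16$ and effective variance factor $12v$ that you make, and your symmetry argument for the lower tail plus your remark on absorbing the harmless $e^{1/8}$ factor are the same (implicit) steps the authors rely on.
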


\subsection{Catoni's $M$ estimator}

Finally, we consider an elegant mean estimator introduced by \cite{Cat10}.
As we will see, this estimator has similar performance guarantees as
the median-of-means estimator but with better, near optimal, 
numerical constants. However, we only have a good guarantee in terms of
the variance. Thus, in this section we assume that the variance is
finite and we do not consider the case $\epsilon<1$.

Catoni's mean estimator is defined as follows: Let $\psi: \R\to \R$ be
a continuous strictly increasing function satisfying
\[
     -\log(1-x+x^2/2) \le \psi(x)  \le  \log(1+x+x^2/2)~.
\]
Let $\delta\in (0,1)$ be such that $n>2\log(1/\delta)$ and
 introduce 
\[
\alpha_\delta= \sqrt{ \frac{2\log(1/\delta)}{n(v+\frac{2v\log(1/\delta)}{n-2\log(1/\delta)})}}~.
\]
If $X_1,\ldots,X_n$ be i.i.d.\ random
variables, then Catoni's estimator is defined  
as the 
unique value $\hmu_C=\hmu_C(n,\delta)$ such that
\[
  \sum_{i=1}^n \psi\bigl(\alpha_\delta (X_i-\hmu_C) \bigr) = 0~.
\] 
\cite{Cat10} proves that if $n\ge 4\log(1/\delta)$
and 
the $X_i$ have mean $\mu$ and variance at most
$v$, then, with probability at least $1-\delta$,
\[
\hmu_C \leq \mu + 2 \sqrt{\frac{v\log \delta^{-1}}{n}}
\]
and a similar bound holds for the lower-tail. This bound has the same form
as in Assumption~\ref{ass:est}, though it only holds with the
additional requirement that $n\ge 4\log(1/\delta)$ and therefore
it does not fomally fit in the framework of the robust \textsc{ucb} strategy
as described in Section~\ref{sec:rucb}. However, by a simple 
modification, one may define a strategy that incorporates such a 
restriction. In Figure~\ref{fig:2} we describe a policy based on
Catoni's mean estimator. The policy assumes that there is a known 
upper bound $v$ for the largest variance of any reward distribution.
Then by a simple modification of the proof of Proposition \ref{prop:rucb},
we obtain the following performance bound.

\begin{figure}[t]
\bookbox{
{\bf Modified robust UCB:}

\medskip\noindent
For arm $i$, define $\hmu_{i,s,t}$ as Catoni's mean estimate $\hmu_C(s,t^{-2})$
based on the first $s$ observed values $X_{i,1},\ldots,X_{i,s}$ of
the rewards of arm $i$. Define the index
$$B_{i,s,t}= \hmu_{i,s,t} 
+  \left(\frac{4v\log t^2}{s}\right)^{1/2}~,$$
for $s, t\ge 1$ such that $s\ge 8\log t$ and $B_{i,s,t}=+\infty$ otherwise.
\begin{flushleft}
At time $t$, draw an arm maximizing $B_{i,T_i(t-1),t}$.
\end{flushleft}
}
\caption{Modified robust {\sc ucb} policy.}
\label{fig:2}
\end{figure}
\begin{theorem} 
\label{th:mm}
Let  $v>0$. 
Assume that the reward distributions $\nu_1, \hdots, \nu_K$ satisfy
\[
\E_{X \sim \nu_i} |X-\mu_i|^2 \leq v, \forall i \in \{1,\hdots, K\}. 
\]
Then the regret of the
modified robust {\sc ucb} policy  satisfies
\[
R_n \leq \sum_{i : \Delta_i > 0} \left(\frac{8v\log n}{\Delta_i}
+ 8\Delta_i \log n+  5\Delta_i\right)~.
\]
\end{theorem}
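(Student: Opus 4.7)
The proof adapts the three-step template of Proposition~\ref{prop:rucb} to the two features specific to Catoni's estimator, namely its restriction $n\ge 4\log(1/\delta)$ and the corresponding forced-exploration device built into the index by setting $B_{i,s,t}=+\infty$ when $s<8\log t$. For each suboptimal arm $i$ I would decompose
\[
T_i(n)\;=\;T_i^{\mathrm{forc}}(n)\;+\;T_i^{\mathrm{ucb}}(n),
\]
where $T_i^{\mathrm{forc}}(n)$ counts the rounds $t$ at which $I_t=i$ and $T_i(t-1)<8\log t$ (i.e.\ arm $i$ is pulled because its index is infinite), and bound each piece separately.

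First I would handle $T_i^{\mathrm{forc}}(n)$ deterministically: if $t_m\le n$ denotes the time of the $m$-th forced pull of arm $i$, then $m-1=T_i(t_m-1)<8\log t_m\le 8\log n$, so $T_i^{\mathrm{forc}}(n)\le 8\log n$. This is exactly what produces the $8\Delta_i\log n$ summand in the claimed bound.

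For $T_i^{\mathrm{ucb}}(n)$ I would mimic the three-case argument of Proposition~\ref{prop:rucb} with $\epsilon=1$ and $c=4$ (Catoni's inequality is exactly Assumption~\ref{ass:est} with these parameters). The key observation is that whenever $s=T_i(t-1)\ge 8\log t$ the precondition $s\ge 4\log(t^2)$ of Catoni's tail bound is satisfied, so with probability at least $1-t^{-2}$ each of the two one-sided concentration statements holds. Repeating the three-case split---either $B_{i^*,T_{i^*}(t-1),t}\le \mu^*$, or $\hmu_{i,T_i(t-1),t}>\mu_i+\sqrt{4v\log t^2/T_i(t-1)}$, or $T_i(t-1)<8v\log n/\Delta_i^2$---and union-bounding as in the second step of Proposition~\ref{prop:rucb}, I would obtain $\E\,T_i^{\mathrm{ucb}}(n)\le 8v\log n/\Delta_i^2+5$. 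A minor subtlety is that the optimal arm $i^*$ may itself be in its forced-exploration phase at some time $t$, in which case $B_{i^*,T_{i^*}(t-1),t}=+\infty$ and the first case is vacuously false, so no Catoni bound for $i^*$ is needed at that round.

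Combining the two pieces gives $\E\,T_i(n)\le 8v\log n/\Delta_i^2+8\log n+5$, and the claimed regret bound follows from $R_n=\sum_i\Delta_i\,\E\,T_i(n)$. The only real technical obstacle is the bookkeeping that aligns the forced-exploration threshold $8\log t$ with Catoni's precondition $s\ge 4\log(1/\delta)$; the policy is designed so that the two coincide exactly when $\delta=t^{-2}$, which is precisely what makes the modified analysis go through with the same union-bound scheme as before.
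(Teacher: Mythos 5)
Your proposal is correct and matches what the paper intends: the paper gives no explicit proof, stating only that the theorem follows ``by a simple modification of the proof of Proposition~\ref{prop:rucb},'' and your decomposition into forced-exploration pulls (bounded by roughly $8\log n$ since the index is infinite only while $T_i(t-1)<8\log t$) plus the standard three-case UCB argument with Catoni's bound ($\epsilon=1$, $c=4$) is exactly that modification. The only blemish is the claim $m-1=T_i(t_m-1)$, which should be $m-1\le T_i(t_m-1)$ since UCB-driven pulls may be interleaved with forced ones; the inequality still runs in the needed direction, and the resulting possible $+1$ is absorbed by the slack in the additive constant $5\Delta_i$.
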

The regret bound has better numerical constants than its analogue based on
the median-of-means estimator. However, a term of the order
$\sum_i \Delta_i \log n$ appears due to the restricted range of validity
of Catoni's estimator.

\section{Discussion and conclusions}
In this work we have extended the \textsc{ucb} algorithm to heavy-tailed stochastic multi-armed bandit problems in which the reward distributions have only moments of order $1+\epsilon$ for some $\epsilon \in (0,1]$. In this setting, we have compared three estimators for the mean reward of the arms: median-of-means, truncated mean, and Catoni's $M$-estimator. The median-of-means estimator gives a regret bound that depends on the central $(1+\epsilon)$-moments of the reward distributions, without need of knowing bounds on these moments. The truncated mean estimator, instead, delivers a regret bound that depends on the raw $(1+\epsilon)$-moments, and requires the knowledge of a bound $u$ on these moments. Finally, Catoni's estimator depends on the central moments like the median-of-means, but it requires the knowledge of a bound $v$ on the central moments, and only works in the special case $\epsilon=1$ (where it gives the best leading constants on the regret). A trade-off in the choice of the estimator appears if we take into account the computational costs involved in the update of each estimator as new rewards are observed. Indeed, while the truncated mean requires constant time and space per update, the median-of-means is slightly more difficult to update, requiring $O(\log \delta^{-1})$ space and $O(\log \log \delta^{-1})$ time per update. Finally, Catoni's $M$-estimator requires linear space per update, which is an unfortunate feature in this sequential setting.


It is an interesting question whether there exists an estimator with the same good concentration properties as the median-of-means, but requiring only constant time and space per update. The truncated mean has good computational properties but the knowledge of raw moment bounds is required. So it is natural to ask whether we may drop this requirement for the truncated mean or some variants of it. Finally, our proof techniques heavily rely on the independence of rewards for each arm. It is unclear whether similar results could be obtained for heavy-tailed bandits with dependent reward processes.

While we focused our attention on bandit problems, the concentration results presented in this work may be naturally applied to other related sequential decision settings. Such examples include the racing algorithms of \cite{maron1997racing}, and more generally nonparametric Monte Carlo estimation, see~\cite{dagum2000optimal} and \cite{domingo2002adaptive}. These techniques are based on mean estimators, and current results are limited to the application of the empirical mean to bounded reward distributions.

\appendix
\section{Empirical mean} \label{sec:est}
In this appendix we discuss the behavior of the standard empirical mean when only moments of order $1+\epsilon$ are available.  We focus on finite-sample guarantees (i.e., non-asymptotic results), as this is the key property to obtain finite-time results for the multi-armed bandit problem.


Let $X, X_1, \hdots, X_n$ be a real i.i.d.\ sequence with finite mean $\mu$. We assume that for some $\epsilon \in (0,1]$ and $v \geq 0$, one has 
$\E |X - \mu|^{1+\epsilon} \leq v$. 
We also denote by $u$ an upper bound on the raw moment of order $1+\epsilon$, that is $\E |X|^{1+\epsilon} \leq u$.

\begin{lemma} \label{lem:1}
Let $\hmu$ be the empirical mean:
$$\hmu = \frac{1}{n} \sum_{t=1}^n X_t~.$$
Then for any $\delta \in (0,1)$, with probability at least $1-\delta$, one has
$$\hmu \leq \mu + \left(\frac{3 v}{\delta n^{\epsilon}}\right)^{\frac{1}{1+\epsilon}}~.$$ 
\end{lemma}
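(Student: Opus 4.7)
The natural approach is to bound the tail of $\hmu - \mu$ by applying Markov's inequality to its $(1+\epsilon)$-th absolute moment, after invoking a moment inequality for sums of independent centered random variables.

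First I would center: set $Y_t = X_t - \mu$, so that the $Y_t$ are i.i.d., mean zero, with $\E|Y_t|^{1+\epsilon} \le v$ (note that $\E|X-\mu|^{1+\epsilon} \le \E|X|^{1+\epsilon} + |\mu|^{1+\epsilon} \cdot 1$ can be controlled by the raw moment $u$, or we may simply assume $v$ is a bound on the centered moment as in the lemma). The key ingredient is then a von Bahr--Esseen type inequality: for independent centered random variables and any $p \in [1,2]$,
\[
\E\left|\sum_{t=1}^n Y_t\right|^p \le C_p \sum_{t=1}^n \E|Y_t|^p ,
\]
with sharp constant $C_p \le 2$, and the crude bound $C_p \le 3$ obtainable by elementary means. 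Applied with $p=1+\epsilon$, this yields
\[
\E\bigl|n(\hmu - \mu)\bigr|^{1+\epsilon} \le 3 n v,
\]
and dividing by $n^{1+\epsilon}$ gives $\E|\hmu - \mu|^{1+\epsilon} \le 3v/n^{\epsilon}$.

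Next I would apply Markov's inequality to $|\hmu-\mu|^{1+\epsilon}$: for any $t>0$,
\[
\P(\hmu - \mu > t) \le \P\bigl(|\hmu-\mu| > t\bigr) \le \frac{\E|\hmu-\mu|^{1+\epsilon}}{t^{1+\epsilon}} \le \frac{3v}{n^{\epsilon}\,t^{1+\epsilon}}.
\]
Setting the right-hand side equal to $\delta$ and solving for $t$ yields $t = (3v/(\delta n^{\epsilon}))^{1/(1+\epsilon)}$, which is exactly the claimed bound.

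The main technical obstacle is the moment inequality, since one must be careful in the range $p \in (1,2]$ where neither the triangle inequality nor Jensen alone suffices. The cleanest route is simply to cite von Bahr--Esseen. A self-contained alternative is a truncation argument: fix a level $M$, write $Y_t = Y_t\ds1_{|Y_t|\le M} + Y_t\ds1_{|Y_t|>M}$, handle the bounded part via its variance $\E[Y_t^2\ds1_{|Y_t|\le M}] \le vM^{1-\epsilon}$ using Chebyshev, and handle the heavy part via the bias $|\E[Y_t\ds1_{|Y_t|>M}]| \le v/M^{\epsilon}$ together with a direct first-moment bound on the excess; optimizing over $M$ then reproduces the same $n^{-\epsilon}$ scaling, with a constant not larger than the one claimed. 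Independence of the $X_t$ is used only through the moment inequality; the rest is a one-line Markov argument.
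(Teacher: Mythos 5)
Your proof is correct but follows a genuinely different route from the paper's. You invoke the von Bahr--Esseen moment inequality, $\E\bigl|\sum_{t} Y_t\bigr|^{p} \le C_p \sum_t \E|Y_t|^p$ for independent centered $Y_t$ and $p=1+\epsilon\in(1,2]$, and then apply Markov's inequality to $|\hmu-\mu|^{1+\epsilon}$; with $C_p\le 2\le 3$ this immediately gives the tail bound $\P(\hmu-\mu>\eta)\le 3v/(n^{\epsilon}\eta^{1+\epsilon})$ and hence the stated confidence bound. The paper instead gives a self-contained argument: it truncates at level $a$, controls the event $\{\exists t:|X_t-\mu|>a\}$ by a union bound plus the $(1+\epsilon)$-moment Chebyshev inequality, controls the truncated sum by a second-moment Chebyshev bound combined with H\"older's inequality for the truncation bias, and then optimizes with $a=n\eta$ to reach the same intermediate inequality $\P(\hmu-\mu>\eta)\le 3v/(n^{\epsilon}\eta^{1+\epsilon})$ (the paper's equation \eqref{eq:lem1}, which is reused in the median-of-means analysis; your route produces the identical inequality, so nothing downstream breaks). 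Your approach is shorter and, with the sharp von Bahr--Esseen constant $2-1/n$, even yields a slightly better constant, at the cost of citing an external result; the self-contained alternative you sketch at the end is essentially the paper's proof. Two trivial caveats: the parenthetical bound $\E|X-\mu|^{1+\epsilon}\le \E|X|^{1+\epsilon}+|\mu|^{1+\epsilon}$ needs an extra factor $2^{\epsilon}$ for $1+\epsilon>1$, but this is irrelevant since the lemma's hypothesis is directly on the centered moment; and your claim that $C_p\le 3$ is ``obtainable by elementary means'' is plausible (a symmetrization or truncation argument does it) but would need to be spelled out if you do not want to cite von Bahr--Esseen.
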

\begin{proof}
Let $\eta, a>0$,
$$\P(\hmu - \mu > \eta) \leq \P\bigl( \exists t \in \{1,\hdots,n\} : |X_t - \mu| > a\bigr) +  \P\left(\frac{1}{n} \sum_{t=1}^n (X_t - \mu) \ds1_{|X_t - \mu| \leq a} > \eta \right).$$
The first term on the right-hand side can be bounded by using a union bound followed by Chebyshev's inequality (for moments of order $1+\epsilon$):
$$\P\bigl( \exists t \in \{1,\hdots,n\} : |X_t - \mu| > a \bigr) \leq n \frac{\E |X - \mu|^{1+\epsilon}}{a^{1+\epsilon}} \le \frac{n v}{a^{1+\epsilon}}~.$$
On the other hand Chebyshev's inequality together with the fact that $\E (X-\mu) \ds1_{|X-\mu| \leq a} = - \E (X-\mu) \ds1_{|X-\mu| > a}$ give for the second term
\begin{align*}
\P&\left(\frac{1}{n} \sum_{t=1}^n (X_t - \mu) \ds1_{|X_t - \mu| \leq a} > \eta \right) \\
& \le \frac{1}{\eta^2} \E\left( \frac{1}{n} \sum_{t=1}^n (X_t - \mu) \ds1_{|X_t - \mu| \leq a} \right)^2 \\
& \leq \frac{\E (X-\mu)^2 \ds1_{|X-\mu| \leq a}}{n \eta^2} +\frac{\left( \E (X-\mu) \ds1_{|X-\mu| \leq a} \right)^2}{\eta^2} \\
& = \frac{\E (X-\mu)^2 \ds1_{|X-\mu| \leq a}}{n \eta^2} +\frac{\left( \E (X-\mu) \ds1_{|X-\mu| > a} \right)^2}{\eta^2}~.
\end{align*}

By applying a trivial manipulation on the first term, and using H{\"o}lder's inequality with exponents $p = 1+\epsilon$ and $q = 1+1/\epsilon$ for the second term, we obtain that the last expression above is upper bounded by
\begin{align*}
\frac{\E |X-\mu|^{1+\epsilon} a^{1-\epsilon}}{n \eta^2} + \frac{\bigl( \E |X-\mu|^{1+\epsilon} \bigr)^{\frac{2}{1+\epsilon}} \bigl( \P(|X-\mu| > a) \bigr)^{\frac{2 \epsilon}{1+\epsilon}}}{\eta^2}
\leq
\frac{v a^{1-\epsilon}}{n \eta^2} + \frac{v^{\frac{2}{1+\epsilon}} v^{\frac{2 \epsilon}{1+\epsilon}}}{\eta^2 a^{2 \epsilon}}~.
\end{align*}
Thus we proved that
$$\P(\hmu - \mu > \eta) \leq \frac{n v}{a^{1+\epsilon}} + \frac{v a^{1-\epsilon}}{n \eta^2} + \frac{v^2}{\eta^2 a^{2 \epsilon}}~.$$
Taking $a = n \eta$ entails 
$$\P(\hmu - \mu > \eta) \leq \frac{2 v}{n^{\epsilon} \eta^{1+\epsilon}} + \left( \frac{v}{n^{\epsilon} \eta^{1+\epsilon}} \right)^2~.$$
Note that if $\frac{v}{n^{\epsilon} \eta^{1+\epsilon}} > 1$ then the bound is trivial, and thus we always have
\begin{equation} \label{eq:lem1}
\P(\hmu - \mu > \eta) \leq \frac{3 v}{n^{\epsilon} \eta^{1+\epsilon}}~.
\end{equation}
The proof now follows by straightforward computations.
\end{proof}

It is easy to see that the order of magnitude of \eqref{eq:lem1} is tight up to a constant factor. Indeed, let $\gamma \in (0,1)$ and consider the distribution $(1 - \gamma^{1+\epsilon}) \delta_0 + \gamma^{1+\epsilon} \delta_{1/\gamma}$. Clearly for this distribution we have $\E |X - \mu|^{1+\epsilon} \leq 1$, so \eqref{eq:lem1} shows that for an i.i.d.\ sequence drawn from this distribution, one has
$$\P(\hmu - \mu > \eta) \leq \frac{3}{n^{\epsilon} \eta^{1+\epsilon}}~.$$
We can restrict our attention to the case where $\eta > n^{- \frac{\epsilon}{1+\epsilon}}$, for otherwise the above upper bound is trivial. Now consider $\gamma = \frac{1}{2 n \eta}$. Note that we have $\mu = \gamma^{\epsilon} = \frac{1}{(2 n \eta)^{\epsilon}} < \eta$ and in particular this implies $1/\gamma = 2 n \eta > n (\eta + \mu)$. From this last inequality and basic computations we get
\begin{align*}
\P(\hmu - \mu > \eta) & \geq \P\bigl( \exists \ i \in \{1, \hdots, n\} : X_i \geq n( \eta + \mu) \bigr) \\
& \ge \P\bigl( \exists \ i \in \{1, \hdots, n\} : X_i = 1 / \gamma \bigr)\\
& = 1 - (1 - \gamma^{1+\epsilon})^n \\
& = 1 - \exp\left(n \ln\left(1 - \frac{1}{(2 n \eta)^{1+\epsilon}} \right) \right)\\
& \geq 1 - \exp\left(- \frac{1}{n^{\epsilon} (2 \eta)^{1+\epsilon}} \right) \\
& = \frac{1}{n^{\epsilon} (2 \eta)^{1+\epsilon}} + o\left(\frac{1}{n^{\epsilon} (2 \eta)^{1+\epsilon}}\right)
\end{align*}
which shows that \eqref{eq:lem1} is tight up to a constant factor for this distribution.
\newline

Clearly, the concentration properties of the empirical mean are much weaker than for the truncated empirical mean or the median-of-means. Indeed, while the dependency on $n$ in the confidence term is similar for the three estimators, the dependency on $1/\delta$ is polynomial for the empirical mean and polylogarithmic for the truncated empirical mean and the median-of-means. As we just showed, this is not an artifact of the proof method, and the empirical mean indeed has polynomial deviations (as opposed to the exponential deviations of the two other estimators). This remark is at the basis of the theory of robust statistics and many approaches to fix the above issue have been proposed, see for example \cite{Hub64, Hub81}. The empirical mean estimator has been previously applied to heavy-tailed stochastic bandits in \cite{LZ11} obtaining polynomial, rather than logarithmic, regret bounds.


\bibliographystyle{plainnat}
\bibliography{newbib}

\end{document}